\DeclareMathOperator*{\argmax}{\textbf{argmax}}   
\theoremstyle{plain}
\newtheorem{theorem}{Theorem}[section]
\newtheorem{lemma}[theorem]{Lemma}
\theoremstyle{definition}
\theoremstyle{remark}
\title{Learning Better Representations From Less Data For Propositional Satisfiability}
\author{%
    Mohamed Ghanem$^*$ \quad Frederik Schmitt$^*$ \quad Julian Siber$^*$ \quad Bernd Finkbeiner$^*$ \\
    $^*$CISPA Helmholtz Center for Information Security \\
    \texttt{\{mohamed.ghanem,frederik.schmitt,julian.siber,finkbeiner\}@cispa.de}
}
\begin{document}

\maketitle

\begin{abstract}
Training neural networks on NP-complete problems typically demands very large amounts of training data and often needs to be coupled with computationally expensive symbolic verifiers to ensure output correctness. In this paper, we present NeuRes, a neuro-symbolic approach to address both challenges for propositional satisfiability, being the quintessential NP-complete problem. By combining certificate-driven training and expert iteration, our model learns better representations than models trained for classification only, with a much higher data efficiency -- requiring orders of magnitude less training data. NeuRes employs propositional resolution as a proof system to generate proofs of unsatisfiability and to accelerate the process of finding satisfying truth assignments, exploring both possibilities in parallel. To realize this, we propose an attention-based architecture that autoregressively selects pairs of clauses from a dynamic formula embedding to derive new clauses. Furthermore, we employ expert iteration whereby model-generated proofs progressively replace longer teacher proofs as the new ground truth. This enables our model to reduce a dataset of proofs generated by an advanced solver by $\sim$$32\%$ after training on it with no extra guidance. This shows that NeuRes is not limited by the optimality of the teacher algorithm owing to its self-improving workflow. We show that our model achieves far better performance than NeuroSAT in terms of both correctly classified and proven instances.
\end{abstract}

\section{Introduction}

Boolean satisfiability (SAT) is a fundamental problem in computer science. For theory, this stems from SAT being the first problem proven NP-complete~\citep{DBLP:conf/stoc/Cook71}. For practice, this is due to many highly-optimized SAT solvers being used as flexible reasoning engines in a variety of tasks such as model checking~\citep{DBLP:journals/fmsd/ClarkeBRZ01,VizelWM15}, software verification~\cite{DBLP:journals/cacm/MouraB11}, planning~\cite{DBLP:conf/ecai/KautzS92}, and mathematical proof search~\citep{HeuleK17}.
Recently, SAT has also served as a litmus test for assessing the symbolic reasoning capabilities of neural models and a promising domain for neuro-symbolic systems~\citep{selsam2018learning, DBLP:conf/sat/SelsamB19, DBLP:conf/iclr/AmizadehMW19, DBLP:conf/aaai/CameronCHL20, DBLP:conf/ijcnn/OzolinsFDGZK22}.
So far, neural models only provide limited, if any, justification for unsatisfiability predictions. NeuroCore~\cite{DBLP:conf/sat/SelsamB19}, for example, predicts an unsatisfiable core, the verification of which can be as hard as solving the original problem. No certificates at all or certificates that are hard to check limit neural methods in a domain where correctness is critical and prevents close integrations with symbolic methods. Therefore, we propose a neuro-symbolic model that utilizes \emph{resolution} to solve SAT problems by generating easy-to-check certificates.

A resolution proof is a sequence of case distinctions, each involving two clauses, that ends in the empty clause (falsum). This technique can also be used to prove satisfiability by exhaustively applying it until no further new resolution steps are possible and the empty clause has not been derived. Generating such a proof is an interesting problem from a neuro-symbolic perspective because unlike other discrete combinatorial problems that have been considered before~\cite{vinyals2015pointer,DBLP:conf/iclr/BelloPL0B17,DBLP:conf/nips/KhalilDZDS17,DBLP:conf/iclr/KoolHW19,DBLP:journals/jmlr/CappartCK00V23}, it requires selecting compatible \emph{pairs} of clauses from the dynamically growing pool, as newly derived clauses are naturally considered for derivation in subsequent steps.
In this work, we devise three attention-based mechanisms to perform this pair-selection needed for generating resolution proofs.
In addition, we augment the architecture to efficiently handle \textit{sat} (satisfiable) formulas with an assignment decoding mechanism that assigns a truth value to each literal. We hypothesize that, despite their final goals being in complete opposition, resolution and \textit{sat} assignment finding can form a mutually beneficial collaboration. On the one hand, clauses derived by resolution incrementally inject additional information into the network,
e.g., deriving a single-literal clause by resolution directly implies that literal should be true in any possible \textit{sat} assignment. On the other hand, finding a \textit{sat} assignment absolves the resolution network from having to prove satisfiability by exhaustion. On that basis, given an input formula, NeuRes proceeds in two parallel tracks: (1) finding a \textit{sat} assignment, and (2) deriving a resolution proof of unsatisfiability.
Both tracks operate on a shared representation of the problem state. Depending on which track succeeds, NeuRes produces the corresponding SAT verdict which is guaranteed to be sound by virtue of its certificate-based design. Since both of our certificate types are efficient to check, we can afford to perform these symbolic checks at each step. When comparing NeuRes with NeuroSAT~\cite{selsam2018learning}, which has been trained to predict satisfiability with millions of samples, we demonstrate that NeuRes achieves a higher accuracy while providing a proof and requires only thousands of training samples.

As for most problems in theorem proving we are not only interest in finding any proof but a short proof. Resolution proofs can vary largely in their size depending on the resolution steps taken. Being able to efficiently check the proof, also allows us to adapt the proof target while training the model. In particular, we explore an expert iteration mechanism~\cite{DBLP:conf/nips/AnthonyTB17,DBLP:conf/iclr/PoluHZBBS23} that pre-rolls the resolution proof of the model and replaces the target proof whenever the pre-rolled proof is shorter. We demonstrate that this bootstrapping mechanism iteratively shortens the proofs of our training dataset while further improving the overall performance of the model. 

We make the following contributions:
\begin{enumerate}
    \item We introduce novel architectures which combine graph neural networks with attention mechanisms for generating resolution proofs and assignments for CNF formulas (Section~\ref{sec:arch}).
    \item We show that for propositional logic, learning to prove rather than predict satisfiability results in better representations and requires far less training samples (Section~\ref{sec:res_experiments} and \ref{sec:full_experiments}).
    \item We devise a bootstrapped training procedure where our model progressively produces shorter resolution proofs than its teacher (Section \ref{sec:shortening_teacher}) boosting the model's overall performance.
\end{enumerate}

The implementation of our framework can be found at \url{https://github.com/Oschart/NeuRes}.
\section{Related Work}

\paragraph{SAT Solving and Certificates.}
We refer to the annual SAT competitions~\citep{SATcomp23} for a comprehensive overview on the ever-evolving landscape of SAT solvers, benchmarks, and proof checkers. SAT solvers are complex systems with a documented history of bugs~\citep{DBLP:conf/sat/BrummayerLB10,DBLP:conf/cade/JarvisaloHB12}, hence proof certificates have been partially required in this competition since 2013~\citep{SATcomp13}.
Unlike satisfiable formulas, there are several ways to certify unsatisfiable formulas~\cite{DBLP:series/faia/Heule21}. Resolution proofs~\citep{DBLP:conf/date/ZhangM03,DBLP:conf/date/GoldbergN03} are easy to verify~\citep{DBLP:conf/ictac/DarbariFM10}, but non-trivial to generate from modern solvers based on the paradigm of conflict-driven clause learning~\citep{DBLP:series/faia/0001LM21}. Clausal proofs, e.g., in DRAT format \cite{DBLP:conf/sat/WetzlerHH14}, are easier to generate and space-efficient, but hard to validate. Verifying the proofs can take longer than their discovery~\citep{DBLP:journals/stvr/HeuleHW14} and requires highly optimized algorithms~\citep{DBLP:journals/jar/Lammich20}.

\paragraph{Deep Learning for SAT Solving.}
NeuroSAT~\citep{selsam2018learning} was the first study of the Boolean satisfiability problem as an end-to-end learning task.
Building upon the NeuroSAT architecture, a simplified version has been trained to predict unsatisfiable cores and successfully integrated as a branching heuristic in a state-of-the-art SAT solver~\cite{DBLP:conf/sat/SelsamB19}. Recent work has employed a related architecture as a phase selection heuristic~\citep{DBLP:conf/iclr/WangHTKMM24}.
It has been shown that both the NeuroSAT architecture and a newly introduced deep exchangeable architecture can outperform SAT solvers on instances of 3-SAT problems~\cite{DBLP:conf/aaai/CameronCHL20}.
The NeuroSAT architecture has also been applied on special classes of crypto-analysis problems~\citep{DBLP:conf/cscml/SunGBP20}.
In addition to supervised learning, unsupervised methods have been proposed for solving SAT problems.
For Circuit-SAT a deep-gated DAG recursive neural architecture has been presented together with a differentiable training objective to optimize towards solving the Circuit-SAT problem and finding a satisfying assignment~\citep{DBLP:conf/iclr/AmizadehMW19}.
For Boolean satisfiability, a differentiable training objective has been proposed together with a query mechanism that allows for recurrent solution trials~\citep{DBLP:conf/ijcnn/OzolinsFDGZK22}.

\paragraph{Deep Learning for Formal Proof Generation.}
In formal mathematics, deep learning has been integrated with theorem proving for clause selection~\cite{DBLP:conf/lpar/LoosISK17, DBLP:journals/corr/abs-2103-03798}, premise selection~\cite{DBLP:conf/nips/IrvingSAECU16, DBLP:conf/nips/WangTWD17, DBLP:conf/icml/BansalLRSW19, DBLP:journals/corr/abs-2303-04488}, tactic prediction~\cite{DBLP:conf/icml/YangD19, DBLP:conf/aaai/PaliwalLRBS20} and whole proof searches~\cite{DBLP:journals/corr/abs-2009-03393, DBLP:journals/corr/abs-2303-04910}.
For SMT formulas specifically, deep reinforcement learning has been applied to tactic prediction~\cite{DBLP:conf/nips/BalunovicBV18}.
In the domain of quantified boolean formulas, heuristics have been learned to guide search algorithms in proving the satisfiability and unsatisfiability of formulas~\cite{DBLP:conf/iclr/LedermanRSL20}.
For temporal logics, deep learning has been applied to prove the satisfiability of linear-time temporal logic formulas and the realizability of specifications~\citep{DBLP:conf/iclr/HahnSKRF21, DBLP:conf/nips/SchmittHRF21, CoslerSHF23}.

\section{Proofs of (Un-)Satisfiability}


We start with a brief review of certifying the (un-)satisfiability of propositional formulas in conjunctive normal form. For a set of Boolean variables $V$, we identify with each variable $x \in  V$ the \emph{positive literal} $x$ and the \emph{negative literal} $\lnot x$ denoted by $\Bar{x}$. A \emph{clause} corresponds to a disjunction of literals and is abbreviated by a set of literals, e.g., $\{\Bar{1},3 \}$ represents $(\lnot x_1 \lor x_3)$. A formula in \emph{conjunctive normal form (CNF)} is a conjunction of clauses and is abbreviated by a set of clauses, e.g., $\{\{\Bar{1},3 \},\{1,2,\Bar{4}\}\}$ represents $(\lnot x_1 \lor x_3) \land (x_1 \lor x_2 \lor \lnot x_4)$. Any Boolean formula can be converted to an equisatisfiable CNF formula in polynomial time, for example with Tseitin transformation \citep{tseitin1983complexity}.
\begin{figure*}[!t]
    \centering
    \hspace{-2.5em}\includegraphics[width=0.9\linewidth]{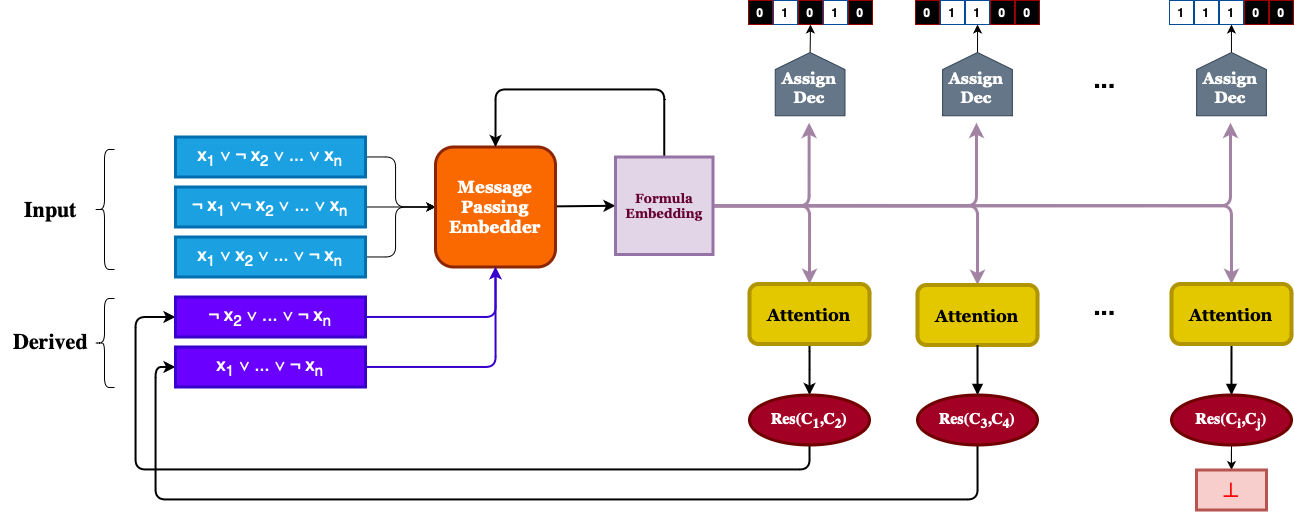}
    \caption{Overall NeuRes architecture}
    \label{fig:overall_arch}
\end{figure*}

A CNF formula is \emph{satisfiable} if there exists an \emph{assignment} $\mathcal{A}: V \rightarrow \{\top,\bot\}$ such that all clauses are satisfied, i.e., each clause contains a positive literal $x$ such that $\mathcal{A}(x) = \top$ or a negative literal $\Bar{x}$ such that $\mathcal{A}(x) = \bot$.
If no such assignment exists we call the formula \emph{unsatisfiable}.
To prove unsatisfiability we rely on resolution, a fundamental inference rule in satisfiability testing~\citep{DBLP:journals/jacm/DavisP60}.
The resolution rule (\textit{Res}) picks clauses with two opposite literals and performs the following inference:
\vspace{-0.2em}
\[
  \begin{prooftree}
    \hypo {C_1 \cup \{x\}}
    \hypo {C_2 \cup \{\Bar{x}\}}
    \infer2[\textit{Res}]{C_1 \cup C_2}
  \end{prooftree}
\]

Resolution effectively performs a case distinction on the value of variable $x$: Either it is assigned to $\mathit{false}$, then $C_1$ has to evaluate to $\mathit{true}$, or it is assigned to $\mathit{true}$, then $C_2$ has to evaluate to $\mathit{true}$. Hence, we may infer the clause $C_1 \cup C_2$. A \emph{resolution proof} for a CNF formula is a sequence of applications of the \textit{Res} rule ending in the empty clause.

\section{Models}
\label{sec:arch}

\subsection{General Architecture}
\label{subsec:gen_arch}

NeuRes is a neural network that takes a CNF formula as a set of clauses and outputs either a satisfying truth assignment or a resolution proof of unsatisfiability. As such, our model comprises a formula embedder connected to two downstream heads: (1) an attention network responsible for selecting clause pairs, and (2) a truth assignment decoder. See Figure~\ref{fig:overall_arch} for an overview of the NeuRes architecture. After obtaining the initial clause and literal embeddings (representing the input formula), we continue with the iterative certificate generation phase. At each step, the model selects a clause pair which gets resolved into a new clause to append to the current formula graph while decoding a candidate truth assignment in parallel. The model keeps deriving new clauses until the empty clause is found (marking resolution proof completion), a satisfying assignment is found (marking a certified \textit{sat} verdict), or the limit on episode length is reached (marking timeout).

\subsection{Message-Passing Embedder}
\label{sec:msg_passing}

Similar to NeuroSAT, we use a message-passing GNN to obtain clause and literal embeddings by performing a predetermined number of rounds. Our formula graph is also constructed in a similar fashion to NeuroSAT graphs where clause nodes are connected to their constituent literal nodes and literals are connect to their complements (cf.~Appendix~\ref{appendix:neurosat_graph}).
For a formula in $m$ variables and $n$ clauses, the outputs of this GNN are two matrices: $E^L \in \mathbb{R}^{m\times d}$ for literal embeddings and $E^C \in \mathbb{R}^{{n}\times d}$ for clause embedding, where $d \in \mathbb{N^+}$ is the embedding dimension.
Here we have two key differences from NeuroSAT. Firstly, NeuroSAT uses these embeddings as voters to predict satisfiability through a classification MLP. In our case, we use these embeddings as clause tokens for clause pair selection and literal tokens for truth value assignment. Secondly, since our model derives new clauses with every resolution step, we need to embed these new clauses, as well as update existing embeddings to reflect their relation to the newly inferred clauses. Consequently, we need to introduce a new phase to the message-passing protocol, for which we explore two approaches: \emph{static embeddings} and \emph{dynamic embeddings}. 

In a \textit{static} approach, we do not change the embeddings of initial clauses upon inferring a new clause. Instead, we exchange local messages between the node corresponding to the new clause and its literal nodes, in both directions. The main advantage of this approach is its low cost. A major drawback is that initial clauses never learn information about their relation to newly inferred clauses.

In a \textit{dynamic} approach, we do not only generate a new clause and its embedding, we also update the embeddings of all other clauses. This accounts for the fact that the utility of an existing clause may change with the introduction of a new clause. We perform one message-passing round on the mature graph for every newly derived clause, which produces the new clause embedding and updates other clause embeddings. Since message-passing rounds are parallel across clauses, a single update to the whole embedding matrix is reasonably efficient. 

\subsection{Selector Networks}
\label{subsec:select}

After producing clause and literal embeddings, NeuRes enters the derivation stage.
At each step, our model needs to select two clauses to resolve, produce the resultant clause, and add it to the current formula.
To realize our clause-pair selection mechanism, we employ three attention-based designs.

\subsubsection{Cascaded Attention (Casc-Attn)}
\label{subsec:casc_attn}

\begin{figure}[h]
    \centering
    \includegraphics[width=0.8\linewidth]{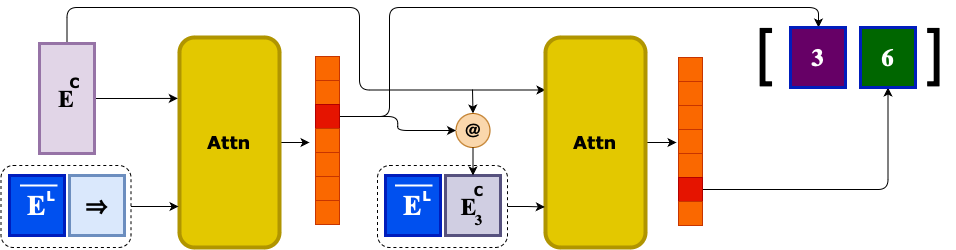}
    \caption{Cascaded attention}
    \label{fig:cascaded_attn}
\end{figure}

In this design, pairs are selected by making two consecutive attention queries on the clause pool. We condition the second attention query on the outcome (i.e., the clause) of the first query. Figure \ref{fig:cascaded_attn} shows this scheme where we perform the first query using the mean of the literal embeddings $\overline{E^L}$ concatenated with a zero vector while performing the second query using the mean of the literal embeddings concatenated with the embedding vector $E^C_{c_1}$ of the clause selected in the first query. Formally, Casc-Attn selects a clause index pair $(c_1,c_2)$ as follows:

\begin{equation}
    c_i = \argmax_j \left[ u^T\tanh(W_1 q_i + W_2 E^C_{j}) \right]
    \quad\text{with}\quad
    q_i =
    \begin{cases}
        \overline{E^L} \mathbin\Vert \mathbf{0} & \text{if } i=1\\
        \overline{E^L} \mathbin\Vert E^C_{c_1} & \text{if } i=2
    \end{cases}
\end{equation}

where $W_1 \in \mathbb{R}^{2d\times d}, W_2 \in \mathbb{R}^{d\times d}, u \in \mathbb{R}^d$ are trainable network parameters.

The advantage of this design is that it is not limited to pair selection and can be used to select a tuple of arbitrary length. The main downside, however, is that this design chooses $c_1$ independently from $c_2$, which is undesirable because the utility of a resolution step is determined by both clauses simultaneously (not sequentially). 

\subsubsection{Full Self-Attention (Full-Attn)}
\label{subsec:full_attn}

To address the downside of independent clause selection, this variant performs self-attention between all clauses to obtain a matrix $S \in \mathbb{R}^{n\times n}$ where $S_{i,j}$ represents the attention score of the clause pair $(c_i, c_j)$ as shown in Figure~\ref{fig:cross_self_attn}. The model selects clause pairs by choosing the cell with the maximal score. In this attention scheme, the clause embeddings are used as both queries and keys.

\begin{figure}[h]
    \centering
    \includegraphics[width=0.65\linewidth]{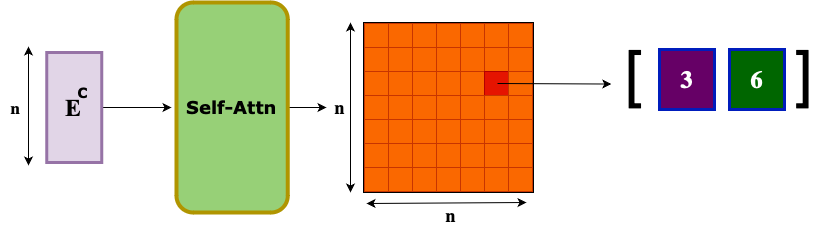}
    \caption{Full self-attention}
    \label{fig:cross_self_attn}
\end{figure}

Formally, Full-Attn selects a clause index pair $(c_1,c_2)$ as follows:

\begin{equation}
\label{eq:full_attn3}
    (c_1,c_2) = \argmax_{(i,j)} \, \, S_{i,j} \quad\text{with}\quad Q = E^C W_Q; \quad K = E^C W_K; \quad S = \frac{QK^T}{\sqrt{d}}
\end{equation}

where $W_Q \in \mathbb{R}^{d\times d}, W_K \in \mathbb{R}^{d\times d}$ are trainable network parameters. Since $S$ contains many cells that correspond to invalid resolution steps (i.e., clause pairs that cannot be resolved), we mask out the invalid cells from the attention grid in ensure the network selection is valid at every step.

\subsubsection{Anchored Self-Attention (Anch-Attn)}
\label{subsec:anch_attn}

In Full-Attn, the attention grid grows quadratically with the number of clauses. In this variant, we relax this cost by exploiting a property of binary resolution where each step targets a single variable in the two resolvent clauses. This allows us to narrow down candidate clause pairs by first selecting a variable as an anchor on which our clauses should be resolved. As such, we do not need to consider the full clause set at once, only the clauses containing the chosen variable $v$. We further compress the attention grid by lining clauses containing the literal $v$ on rows while lining clauses containing the literal $\neg v$ on columns. This reduces the redundancy of the attention grid since clauses containing the variable $v$ with the same parity cannot be resolved on $v$, so there is no point in matching them.
In this scheme, we have two attention modules: one attention network to choose an anchor variable followed by a self-attention network to produce the anchored score grid.

\begin{figure}[h]
    \centering
    \includegraphics[width=0.8\linewidth]{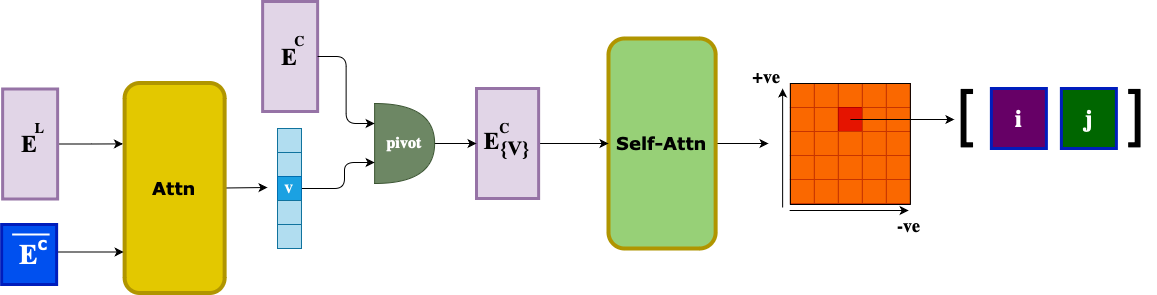}
    \caption{Anchored self-attention}
    \label{fig:anch_attn}
\end{figure}

In light of Figure \ref{fig:anch_attn}, this approach combines structural elements from Casc-Attn and Full-Attn; however, both elements are used differently in Anch-Attn. Firstly, the attention mechanism in Casc-Attn is used to select clauses whereas Anch-Attn uses it to select variables. Secondly, self-attention in Full-Attn matches any pair of clauses ($c_i$, $c_j$) in both directions as the row and column dimensions in the attention score grid reflect the same clauses (all clauses). By contrast, Anch-Attn computes self-attention scores for clause pairs in only one order (positive instance to negative instance). Formally, Anch-Attn selects an anchor variable $v$ as follows:
\begin{equation}
    v = \argmax\limits_{i} \left[u^T\tanh(W_1 \overline{E^C} + W_2 (E^{L^+}_{i} + E^{L^-}_{i})) \right]
\end{equation}
where $W_1 \in \mathbb{R}^{d\times d}, W_2 \in \mathbb{R}^{d\times d}, u \in \mathbb{R}^d$ are trainable network parameters. The clause index pair $(c_1,c_2)$ is then selected according to the same equations of Full-Attn (Eq. \ref{eq:full_attn3}) using the $v$-anchored set of clause embeddings.

\subsection{Assignment Decoder}
\label{subsec:assign_decoder}

To extract satisfying assignments, we use a sigmoid-activated MLP $\psi$ on top of the literal embeddings $E^L$ to assign a truth value $\hat{\mathcal{A}}(l_i)$ to a literal $l_i$ as shown in Eq. \ref{eq:assign_decoder}.
\begin{equation}
    \label{eq:assign_decoder}
    \hat{\mathcal{A}}(l_i) = \sigma(\psi(E^L_i))
\end{equation}
Note that since for each variable, we have a positive and a negative literal embeddings, we can construct two different truth assignments at a time using this method. However, supervising both assignments did not improve the performance compared to only supervising the positive assignment (on positive literal embeddings). Thus, to simplify our loss function, we only derive truth assignments from the positive literal embeddings at train time while extracting both at test time. Interestingly, at test time, we found using negative literals (in addition to the positive ones) sometimes produces satisfying assignments before the positive branch despite receiving no direct supervision during training. Our intuition regarding this observation attributes it to the fact that the formula graph has no explicit notion of positive and negative literals, it only represents connections to clauses (positive and negative literals are connected by an undirected edge that does not distinguish their parity). As such, both literal nodes have a different local view into the rest of formula, which could result in one of them leading to a satisfying assignment faster than the other.

\section{Training and Hyperparameters}
\subsection{Dataset}\label{subsec:data}
For our training and testing data, we adopt the same formula generation method as NeuroSAT~\cite{selsam2018learning}, namely $\textbf{SR}(n)$ where $n$ is the number of variables in the formula. This method was designed to generate a generalized formula distribution that is not limited to a particular domain of SAT problems. To control our data distributions, we vary the range on the number of Boolean variables involved in each formula. For our training data, we use formulas in $\text{\textbf{SR}}(U(10, 40))$ where $U(10,40)$ denotes the uniform distribution on integers between 10 and 40 (inclusive).
To generate our teacher certificates comprising resolution proofs and truth assignments, we use the BooleForce solver~\citep{BooleForce} on the formulas generated on the $\textbf{SR}$ distribution.

\subsection{Loss Function}
\label{sec:loss}
We train our model in a supervised fashion using teacher-forcing on solver certificates. During \textit{unsat} episodes, teacher actions (clause pairs) are imposed over the whole run. The length of the teacher proof dictates the length of the respective episode, denoted as $T$. Model parameters $\theta$  maximize the likelihood of teacher choices $y_t$ thereby minimizing the resolution loss $\mathcal{L}_{Res}$ shown in Eq \ref{eq:res_loss}. 
\begin{equation} \label{eq:res_loss}
\begin{split}
    \mathcal{L}_{Res} = - \frac{1}{T} \sum_t {\log(p(y_t; \theta))} \cdot \gamma^{(T-t)}
\end{split}
\end{equation}
During \textit{sat} episodes, we minimize $\mathcal{L}_{sat}$ computed as the binary cross-entropy loss between the sigmoid-activated outputs of assignment decoder $\hat{\mathcal{A}}: V \rightarrow [0,1]$ and the teacher assignment $\mathcal{A}: V \rightarrow \{0,1\}$ as shown in Eq. \ref{eq:sat_loss}. 

\begin{equation} \label{eq:sat_loss}
\begin{split}
    \mathcal{L}_{sat} = \frac{1}{T} \sum\limits_{t} \left[\frac{\gamma^{(T-t)}}{|V|} {\sum\limits_{v}^V \text{BCE}(\hat{\mathcal{A}}(v), \mathcal{A}(v))} \right]
\end{split}
\end{equation}
In both types of episodes, step-wise losses are weighted by a time-horizon discounting factor $\gamma < 1.0$ over the whole episode. The main rationale behind this is that later losses should have higher weights as the formula tends to get easier to solve with each new clause inferred by resolution.

\subsection{Hyperparameters}
\label{sec:hparams}

NeuRes has several hyperparameters that influence network size, depth, and loss weighting. In the experiments we fix the embedding dimension to 128. We train our models with a batch size of 1 and the Adam optimizer \cite{kingma2014adam} for 50 epochs which took about six days on a single NVIDIA A100 GPU. We linearly anneal the learning rate from $5\times 10^{-5}$ to zero over the training episodes. This empirically yields better results than using a constant learning rate. We use a time discounting factor $\gamma = 0.99$ for the episodic loss. We apply global-norm gradient clipping with a ratio of 0.5 \cite{pascanu2012understanding}.

\section{Generating Resolution Proofs}
\label{sec:res_experiments}

NeuRes uses resolution as the core reasoning technique for certificate generation, both in the \textit{unsat} and \textit{sat} cases. Hence, we start with an in-depth comparative evaluation of several internal variants for resolution only. In particular, we evaluate the success rate (i.e., problems solved before timeout) and proof length relative to the teacher, denoted by $\text{p-Len} = \frac{|\mathcal{P}_{\text{NeuRes}}|}{|\mathcal{P}_{\text{teacher}}|}$. We use a limit of $4$ on this ratio as a timeout to avoid simply brute-forcing a resolution proof. 
Note that we measure p-Len only for solved formulas to avoid diluting the average with resolution trails that timed out.
For experiments in this section, we train on 8K \textit{unsat} formulas in $\text{\textbf{SR}}(U(10, 40))$ and test our models on 10K unseen formulas belonging to the same distribution. We use more formulas than the model was trained on to more reliably demonstrate its learning capacity.

\begin{table*}[h!]
  \caption{Performance of all attention variants on \textit{unsat} $\text{\textbf{SR}}(U(10, 40))$ test problems.}
  \label{table:attn_variants_eval}
\begin{center}
\begin{small}
\begin{sc}
\scalebox{0.9}{
\renewcommand{\arraystretch}{1.55}
    \begin{tabular}{lcccc}
    \toprule
    \multirow{2}{1.75cm}{\textbf{Variant}} & \multicolumn{2}{c|}{\textbf{Static-Embed}} & \multicolumn{2}{c}{\textbf{Dynamic-Embed}} \\
    \cline{2-5} 
    & \textbf{Proven (\%)} & \textbf{p-Len} & \textbf{Proven (\%)} & \textbf{p-Len} \\
    \hline
    Casc-Attn & 14.72 & 1.87 & 37.33 & 1.79  \\ \hline
    Full-Attn & 25.38 & \textbf{1.61} & \textbf{95.2} & \textbf{1.67}  \\ \hline
    Anch-Attn & \textbf{28.72} & 2.12 & 60.5 & 2.28  \\ \hline
\bottomrule
\end{tabular}}
\end{sc}
\end{small}
\end{center}
\vskip -0.1in
\end{table*}

\subsection{Attention Variants}
\label{sec:attn_variants}
To assess the basic resolution performance of NeuRes, we evaluate each attention variant using both static and dynamic embeddings. For this experiment, we perform 32 rounds of message-passing for each input formula. As shown in Table \ref{table:attn_variants_eval}, dynamic embedding is decisively better for all three attention variants, thereby confirming its conceptual merit. While anchored attention leads over other variants under static embeddings, full attention performs significantly better for dynamic embeddings, albeit at the cost of longer proofs on average. We believe that Anch-Attn's better performance in the static setting can be explained through the full connectivity of its attention grid (proven in Appendix~\ref{appendix:connect}). Since dynamic-embedding Full-Attn is the best-performing configuration over in-distribution test settings, we will demonstrate the remaining evaluation experiments exclusively on this variant.

\begin{table*}[t!]
\caption{Bootstrapped training data reduction statistics. Reduction statistics are computed on the $\text{\textbf{SR}}(U(10, 40))$ training set while p-Len and success rate are computed on a test set of the same distribution.}
 \label{table:gen_reduction}
\vskip 0.15in
\begin{center}
\begin{small}
\begin{sc}
\scalebox{0.9}{
 \renewcommand{\arraystretch}{1.5}
    \begin{tabular}{c c}
     \hline
     Reduction Depth & \textbf{max}: $23$, \textbf{avg}: $6.6$ \\
     \hline
     Proof Reduction (\%) & \textbf{max}: $86.11$, \textbf{avg}: $33.51$ \\
     \hline
     Proofs Reduced (\%) & 90.08 \\
     \hline
     Total Reduction (\%) & 31.85 \\
     \hline
     p-Len & 1.15 \\
     \hline
     Success Rate (\%) & 100.0 \\
     \hline
   \bottomrule
   \end{tabular}
}
\end{sc}
\end{small}
\end{center}
\vskip -0.1in
\end{table*}

\subsection{Shortening Teacher Proofs with Bootstrapping}
\label{sec:shortening_teacher}
During our initial experiments, we discovered proofs produced by NeuRes that were shorter than the corresponding teacher proofs in the training data. Although teacher proofs were generated by a traditional SAT solver, they are not guaranteed to be size-optimal. The size of resolution proofs is their only real drawback, hence any method that can reduce this size would be immensely useful. Upon closer inspection we find that, on average, our previous best performer trained with regular teacher-forcing manages to shorten $\sim$$18\%$ of teacher proofs by a notable factor (cf.\ Appendix \ref{appendix:proof_red}).  
This inspired us to devise a bootstrapped training procedure to capitalize on this feature:
We pre-roll each input problem using model actions only, and whenever the model proof is shorter than the teacher's, it replaces the teacher's proof in the dataset. In other words, we maximize the likelihood of the shorter proof. In doing so iteratively, the model progressively becomes its own teacher by exploiting redundancies in the teacher algorithm.

The outcome of this bootstrapped training process is summarized in Table~\ref{table:gen_reduction}. We find that bootstrapping results in notable gains in terms of both success rate and optimality. The sharp decline in proof length (relatively quantified by p-Len) at test time shows that the models transfers the bootstrapped knowledge to unseen test formulas, as opposed to merely overfitting on training formulas. In addition to success rate and p-Len, we inspect the reduction statistics of our bootstrapped variant (first three rows of Table~\ref{table:gen_reduction}). Since the bootstrapped model performs multiple reduction scans over the training dataset, we add a metric for reduction depth computed as the number of progressive reductions made to a proof. To further quantify this effect, we report the maximum and average reduction ratios of reduced proofs relative to teacher proofs. Finally, we report the total reduction made to the dataset size in terms of total number of proof steps.

In Appendix \ref{appendix:proof_red}, we have compiled additional statistics (cf.\ Table \ref{table:reduction}) on proof shortening during the training process, as well as an example proof reduced by the bootstrapped NeuRes (Figure \ref{fig:reduction_example}). We only include a small reduction example (from 20 steps to 10 steps) for space constraints, but we observed many more examples of much larger reductions (e.g., over $400$ steps).

\section{Resolution-Aided SAT Solving}
\label{sec:full_experiments}

In this section, we evaluate the performance of our fully integrated model trained on a hybrid dataset comprising 8K unsatisfiable formulas (and their resolution proofs) and 8K satisfiable formulas (and their satisfying assignments). For the unsatisfiable formulas, timeout ($4\times {|\mathcal{P}_{\text{teacher}}|}$) and optimality (p-Len) are measured similarly to previous experiments. For satisfiable formulas, we set the timeout (maximum \#trials) to $2\times |V|$. Ultimately, this section aims to investigate the effect of incorporating a certificate-driven downstream head on the quality of the learnt representations through its impact on the performance of the complementary task, i.e., proving/predicting satisfiability. We use NeuroSAT as our baseline as it employs the same formula embedding architecture. Since NeuroSAT proves \textit{sat} but only \textit{predicts} \textit{unsat}, we train a classification MLP on top of our trained NeuRes model to further showcase the benefit of our representations on prediction accuracy.

\begin{table*}[h]
  \caption{Performance of full solver mode tested on $\textbf{SR}(40)$ problems and trained on $\text{\textbf{SR}}(U(10, 40))$ problems where \textsc{predicted} refers to the satisfiability prediction without certificate.}
  \label{table:full_sat_modes}
\begin{center}
\begin{small}
\begin{sc}
\scalebox{1.0}{
  \renewcommand{\arraystretch}{1.4}
  \begin{tabular}{ccccccc}
    \hline
    \multirow{2}{1.4cm}{\textbf{Model}} & \multicolumn{3}{c|}{\textbf{Proven (\%)}} & \multicolumn{3}{c}{\textbf{Predicted (\%)}} \\
    \cline{2-7}
     & SAT & UNSAT & Total & SAT & UNSAT & Total \\
    \hline
    NeuRes & \textbf{96.8} & \textbf{99.6} & \textbf{98.2} & \textbf{84.28} & \textbf{99.2} & \textbf{91.65} \\ \hline
    NeuroSAT~\cite{selsam2018learning} & 70 & - & - & 73 & 96 & 85  \\ \hline
\bottomrule
\end{tabular}
}
\end{sc}
\end{small}
\end{center}
\vskip -0.1in
\end{table*}

Table \ref{table:full_sat_modes} confirms this main hypothesis. In essence, this result points to the fact that learning signals obtained from training on \textit{unsat} certificates largely enhance the ability of the neural network to extract useful information from the input formula. This is doubly promising considering NeuroSAT was trained on \textit{millions} of formulas while NeuRes was trained on only \textit{16K} formulas. Lastly, we find that augmenting \textit{sat} formulas by resolution derivations results in relative improvements ($\sim2.3\%$) in success rate even though these derivations are attempting to prove unsatisfiability.

\section{Utilizing Model Fan-Out}
In our Full-Attention module, we compute $n^2$ scores and only perform the top-score resolution step. This greedy approach arguably underutilizes the attention grid computations as it ignores other high-scoring steps that might lead to a shorter proof thereby improving the success rate in addition to reducing the number of queries to the model. The latter leads to an overall runtime reduction since performing an extra symbolic resolution step is much faster than a forward model pass. As such, we experiment with performing the top $k$ steps of the attention grid after each forward pass. It should be noted, however, that this yields diminishing returns as it leads to a faster growth of the clause base which in turn inflates the attention grid. For $k>1$, after deriving the empty clause, only clauses that connect to it in the resolution graph are kept in the final proof. This post-processing step is linear in the proof length and eliminates redundant resolution steps resulting from the higher fan-out. Table \ref{table:topk} shows that taking the top $3$ steps already yields a massive reduction in proof lengths along with a significant boost to the success rate. 

\begin{table*}[h]
\caption{Performance of different model fan-outs on $\textbf{SR}(40)$ test data. Proof length (p-Len) and \#Model Calls are both normalized by the length of the teacher proof.}
 \label{table:topk}
\vskip 0.15in
\begin{center}
\begin{small}
\begin{sc}
\scalebox{0.9}{
 \renewcommand{\arraystretch}{1.5}
    \begin{tabular}{c c c c}
     \hline
     \textbf{Full-Attn Fan-out} & \textbf{p-Len} & \textbf{\#Model Calls} & \textbf{Total Proven (\%)} \\
     \hline
     \textbf{Top-1} & 1.15 & 1.15 & 98.2 \\
     \hline
     \textbf{Top-3} & 0.57 & 0.49 & 99.9 \\
     \hline
     \textbf{Top-5} & \textbf{0.52} & \textbf{0.43} & \textbf{100.0} \\
     \hline
   \bottomrule
   \end{tabular}
}
\end{sc}
\end{small}
\end{center}
\vskip -0.1in
\end{table*}

One way to offset the attention grid inflation with higher fan-out would be to keep a saliency map for all clauses then discarding $k$ clauses with the least saliency scores after each forward pass. One simple way to compute this saliency score for a clause would be the sum/mean/max of its respective row in the attention scores grid. Another proxy for saliency could be the recency score reflecting how many steps have elapsed since the last time a given clause was used.

\section{Generalizing to Larger Problems}

In order to test our model's out-of-distribution performance, we evaluate our NeuRes model on five datasets comprising formulas with up to 5 times more variables than encountered during training. We use the same distributions reported by NeuroSAT and we run our model for the same maximum number of iterations (1000). 

\begin{figure}[h]
   \centering
   \includegraphics[width=0.52\linewidth]{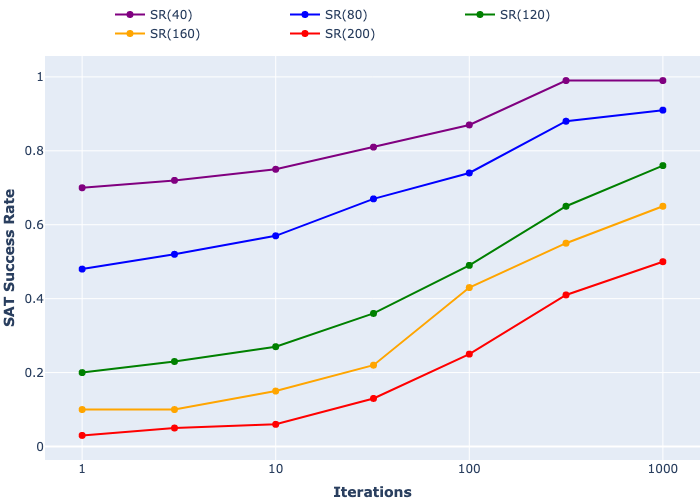}
   \caption{SAT success rate over iterations.}
   \label{fig:ood_srn}
\end{figure}

Figure \ref{fig:ood_srn} shows the scalability of NeuRes to larger problems by letting it run for more iterations. Compared to NeuroSAT~\cite{selsam2018learning}, NeuRes scores a much higher first-try success rate on all 5 problem distributions, and a higher final success rate on all of them except for $\textbf{SR}(40)$ on which both models nearly score $100\%$. Particularly, NeuRes shows higher first-try success on the 3 largest problem sizes where NeuroSAT solves zero or near-zero problems on the first try. 
\section{Conclusion}
\label{sec:conclusion}

In this paper, we introduced a deep learning approach for proving and predicting propositional satisfiability. We proposed an architecture that combines graph neural networks with attention mechanisms to generate resolution proofs of unsatisfiability.
Unlike methods that merely predict unsatisfiability, our models provide easily verifiable certificates for their verdicts.
We demonstrated that our certificate-based training and resolution-aided mode of operation surpass previous approaches in terms of performance and data efficiency, which we attribute to learning better representations.

Despite its promising benchmark performance, our model cannot solely outperform highly engineered industrial solvers, as is currently the case for all neural methods as standalone tools. The gap between neural networks and symbolic algorithms is still rather large, and our hope is to bring deep learning methods one concrete step closer to filling this gap. For NeuRes, this step is recognizing the immense value of carefully integrating certificates into the model design and training as opposed to using shallow supervision labels.
Last but not least, it is worth noting that even at their present state, neural networks stand great potential to advance traditional solvers by combining them into hybrid solvers that utilize the deep long-range dependencies captured by neural networks along with the exploration speed of symbolic algorithms. Moreover, we demonstrated a unique potential to advance SAT solving through proof reduction, as proof size is a major challenge in certifying the results of traditional solvers. This proof reduction is facilitated by a bootstrapped training procedure that uses teacher proofs as a guide as opposed to a golden standard.

\begin{ack}
This work was supported by the European Research Council (ERC) Grant HYPER (No. 101055412).
\end{ack}

\bibliography{bibliography}
\bibliographystyle{abbrvnat}


\newpage
\appendix
\section*{Appendix}

\section{NeuroSAT Formula Graph Construction}
\label{appendix:neurosat_graph}

\begin{figure}[htb]
    \centering
    \subfloat[\centering Literal-to-Clause Phase]{{\includegraphics[width=0.3\linewidth]{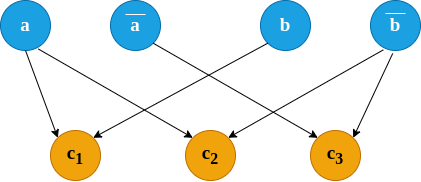} }}%
    \qquad
    \subfloat[\centering Clause-to-Literal Phase]{{\includegraphics[width=0.3\linewidth]{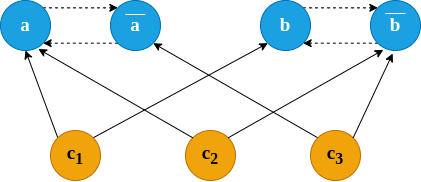} }}%
    \caption{Two-phase message-passing round on NeuroSAT formula graph.}%
    \label{fig:neurosat_graph}%
\end{figure}
NeuroSAT-style formula graphs have two designated node types: clause nodes connected to the literal nodes corresponding to their constituent literals \cite{selsam2018learning}. For example, in Figure \ref{fig:neurosat_graph}, the clause contents are as follows: $c_1=(a\lor b), \quad c_2=(a\lor \Bar{b}), \quad c_3 = (\Bar{a}\lor \Bar{b})$. Each message-passing round involves two exchange phases: (1) Literal-to-Clause, and (2) Clause-to-Literal (and implicitly Literal-to-Complement). This construction is particularly efficient as it allows the message-passing protocol to cover the entire graph connectivity in at most $|V|+1$ rounds where $V$ is the set of variables in the formula.

\section{Clause Connectivity Under Static Embeddings}
\label{appendix:connect}
In Section \ref{sec:msg_passing}, we stated that under static embeddings for a derived clause, as the embedder creates its embedding, it only updates the representations of the variables involved in it -- leaving other clause embeddings intact. This might present a problem for Full-Attn where the attention grid contains all clauses including disconnected\footnote{We use the terms \textbf{\textit{connected}} and \textbf{\textit{disconnected}} here to refer to the fact of whether two nodes have exchanged messages (in either direction) or not, respectively.} pairs. An example of such a pair would be two derived clauses that do not share a variable. This could potentially lower the efficacy of the Full-Attn mechanism as it tries to match clauses that are unaware of each other. Interestingly, despite being a relaxation on Full-Attn, Anch-Attn has a distinct edge over Full-Attn under static embeddings in form of the following property:
\begin{lemma}
\label{lemma:anch_connected}
Clauses in the variable-anchored attention grid of Anch-Attn are guaranteed to be connected under both static and dynamic embeddings.
\end{lemma}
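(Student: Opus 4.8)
The plan is to establish the lemma directly from the construction of the Anch-Attn attention grid together with the definition of ``connected'' given in the footnote, namely that two nodes are connected iff they have exchanged a message in either direction. Fix an anchor variable $v$ selected by the anchor-selection network. By construction, the rows of the $v$-anchored grid are exactly the clauses containing the literal $v$ and the columns are exactly the clauses containing the literal $\Bar{v}$. So it suffices to show: for every clause $c_r$ with $v \in c_r$ and every clause $c_c$ with $\Bar{v} \in c_c$, the nodes $c_r$ and $c_c$ are connected, under both the static and the dynamic embedding regimes.

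The key observation is that the variable node $v$ (or, in the literal-node formulation of the NeuroSAT graph from Appendix~\ref{appendix:neurosat_graph}, the pair of literal nodes for $v$ and $\Bar{v}$, which are joined by an undirected edge) acts as a common neighbor. First I would argue the static case: when a clause $c$ containing $v$ is introduced (whether it is an initial clause or a newly derived one), the static embedding update exchanges local messages between the node for $c$ and the nodes of its constituent literals \emph{in both directions}; in particular $c$ exchanges a message with the literal node for $v$ (and with that for $\Bar{v}$ via the literal-complement edge). Hence every row clause $c_r$ is connected to $v$, and symmetrically every column clause $c_c$ is connected to $v$. Connectivity in the sense of the footnote is about having exchanged messages along incident edges, so ``connected to $v$'' on both sides gives the desired relation between $c_r$ and $c_c$ through the shared variable node: both clause nodes have participated in a message exchange over the edge incident to $v$, which is precisely what makes the attention query between them meaningful. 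For the dynamic case the argument only gets easier: every newly derived clause triggers a full message-passing round on the mature graph, so all clause embeddings (in particular all row and column clauses for the chosen anchor) are refreshed against the current graph, and each such clause again exchanges messages with the literal node of its anchor variable; thus the same common-neighbor argument applies, and in fact every pair of clauses sharing any variable becomes connected.

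The main obstacle, and the point that needs the most care, is pinning down the precise meaning of ``connected'' so that the argument is not circular: the footnote defines it via message exchange, but for the lemma to be the right statement one wants this to coincide with the notion that actually matters for Full-Attn's efficacy, i.e. that the two embeddings have had a chance to incorporate information about a shared structural element. I would therefore spend a sentence or two making explicit that ``$c_r$ and $c_c$ are connected'' should be read as ``there is a node (here, the anchor literal/variable node) with which both $c_r$ and $c_c$ have exchanged messages'', contrasting this with the Full-Attn failure mode of two derived clauses sharing \emph{no} variable, which in the static regime never exchange messages even indirectly. Once the definitional point is fixed, the remainder is a routine case check over (i) initial vs.\ derived clauses and (ii) static vs.\ dynamic embeddings, in each case exhibiting the anchor literal node as the witness.
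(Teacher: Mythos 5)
Your overall strategy matches the paper's: the shared anchor variable is the structural witness, and the argument splits on input versus derived clauses and static versus dynamic embeddings. The dynamic case is fine. The gap is in the static case, and it is exactly the point you flag as needing "the most care" but then resolve by redefinition rather than by argument. Under the footnote's notion of connectivity as the paper actually uses it, two clauses are connected when information from one has propagated to the other through the message-passing protocol (clause nodes are never adjacent in the bipartite graph, so "exchanged messages" can only mean this transitive, information-flow sense; the paper's own proof makes this explicit by arguing that a clause "receives a message from $v$ containing information about all other clauses containing $v$"). Your static-case argument only establishes that $c_r$ and $c_c$ have each exchanged messages with the literal node(s) of the anchor variable. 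That is a statement about two clause--literal edges, not about the clause pair: it does not by itself show that anything about $c_r$ ever reached $c_c$ or vice versa. Declaring that "connected" should be \emph{read} as "share a message partner" changes the lemma into a weaker statement rather than proving the stated one.

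What closes the gap is a temporal argument, which is how the paper proceeds: order the clauses in the anchored set by when they entered the graph, and observe that when the most recently derived clause is embedded, it pulls in the anchor literal's embedding, which at that point already encodes every earlier clause containing $v$ (since each of those deposited a message into that literal node when it was itself introduced, and static updates never erase this). This exhibits one clause that is connected, in the information-flow sense and in at least one direction, to everything else in the grid -- which is the precise claim the paper's proof establishes ("at least one clause $A_i \in A$ that reaches all other clauses in $A$"). If you want your stronger pairwise version, you would need to run the same ordering argument for every row--column pair (the later-introduced clause of the pair receives the earlier one's contribution via the shared variable), and you would also need to be careful that information deposited at the literal node for $v$ actually crosses the complement edge to $\Bar{v}$ under the purely local static update; you gesture at the complement edge but do not verify that the local update traverses it. As written, the proposal proves a different, weaker property than the lemma asserts.
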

\begin{proof}
     Let $v$ be a variable in the input formula, and the set of clauses of a $v$-anchored attention grid be $A$. We show that we always have at least one clause $A_i \in A$ that reaches all other clauses in $A$ on the formula graph.
     We make two case distinctions:
     
     \textbf{Case 1:} All clauses in $A$ are input clauses (in the original formula). Here, the lemma follows trivially since all these clause were connected during the input-phase message-passing protocol as they share at least one variable $v$.

     \textbf{Case 2:} $A$ contains derived clauses. Let $A_i$ be the most recently derived clause in $A$. Since $A_i$ shares variable $v$ with all other clauses in $A$, then $A_i$ would be connected to them all during the derivation-phase message-passing protocol immediately after $A_i$ was derived. This is because $A_i$ receives a message from $V$ (under both static and dynamic embeddings) containing information about all other clauses containing $v$, which is precisely $A\setminus\{A_i\}$. Therefore, the lemma holds. 
\end{proof}

\newpage

\section{Teacher Proof Reduction}
\label{appendix:proof_red}

One rather interesting observation on Table \ref{table:reduction} is that the model appears to be marginally better at producing shorter proofs for unseen (test) formulas than for training formulas. While we would normally expect the opposite, a fair speculation would be that the trained model was teacher-forced to match teacher proofs during training over multiple epochs while the same does not hold for unseen formulas where the bias towards teacher behavior is significantly lower. Definitively confirming this would require a more in-depth investigation.

\begin{table}[h!]
  \caption{Teacher proof reduction statistics of non-bootstrapped model trained on unreduced $\text{\textbf{SR}}(U(10, 40))$ dataset. Note that all rows, except for Total Reduction, are computed over the \textit{reduced portion} of the dataset, i.e., the proofs that were successfully shortened by NeuRes.}
  \label{table:reduction}
\vskip 0.15in
\begin{center}
\begin{small}
\begin{sc}
\scalebox{1.0}{
  \renewcommand{\arraystretch}{1.2}
     \begin{tabular}{ccc}
      \hline
      \textbf{(\%)} & \textbf{Train} & \textbf{Test} \\
      \hline
      Proofs Reduced  & 17.82 & 18.29 \\
      \hline
      Max. Reduction & 86.11 & 76.4 \\
      \hline
      Avg. Reduction & 23.55 & 23.65 \\
      \hline
      Total Reduction & 3.07 & 3.15 \\
      \hline
\bottomrule
\end{tabular}
}
\end{sc}
\end{small}
\end{center}
\vskip -0.1in
\end{table}

\begin{figure}[h]
    \centering
    \subfloat[\centering Teacher Proof]{{\includegraphics[width=14cm]{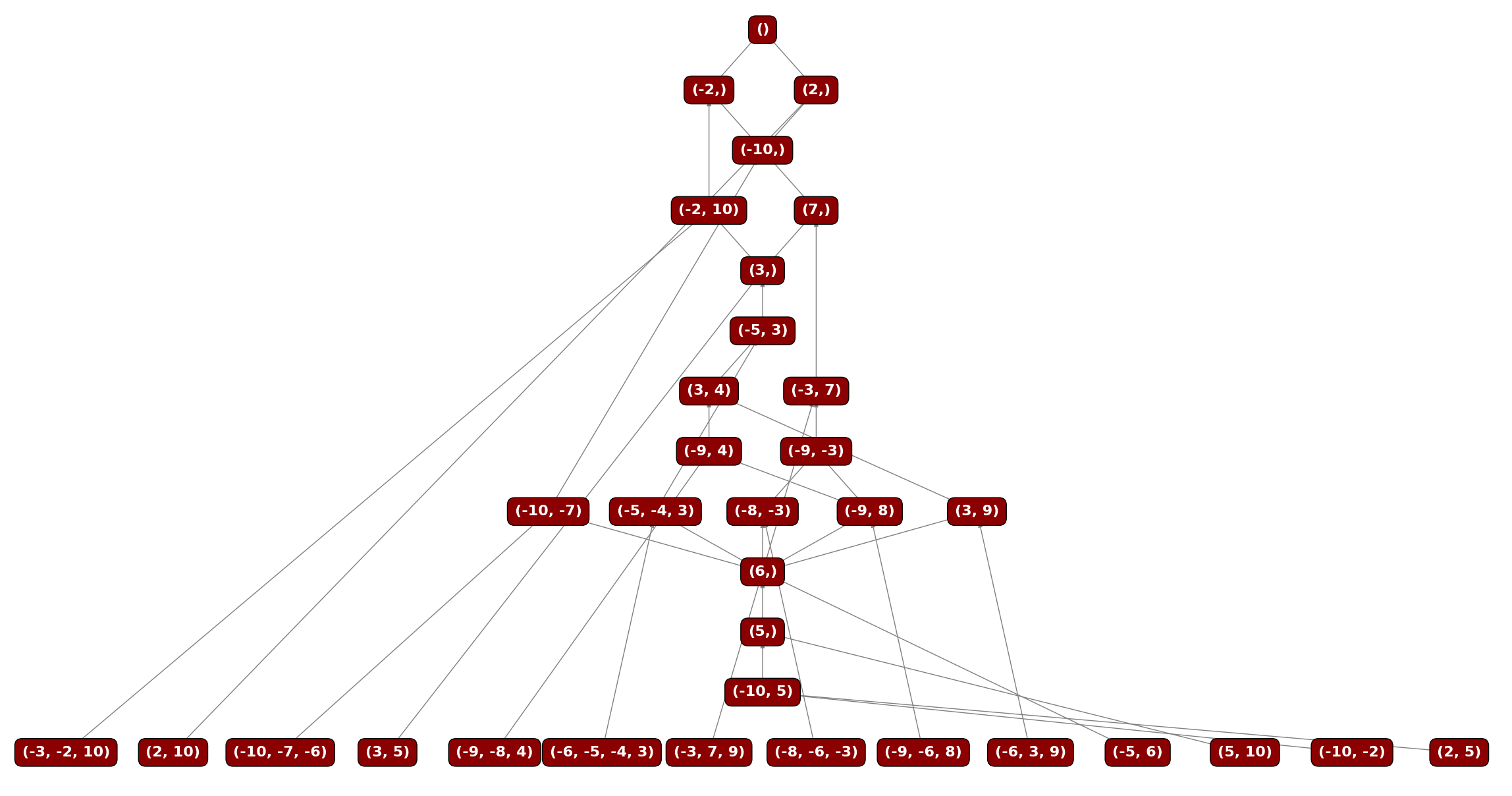} }}%
    \qquad
    \subfloat[\centering NeuRes Proof]{{\includegraphics[width=14cm]{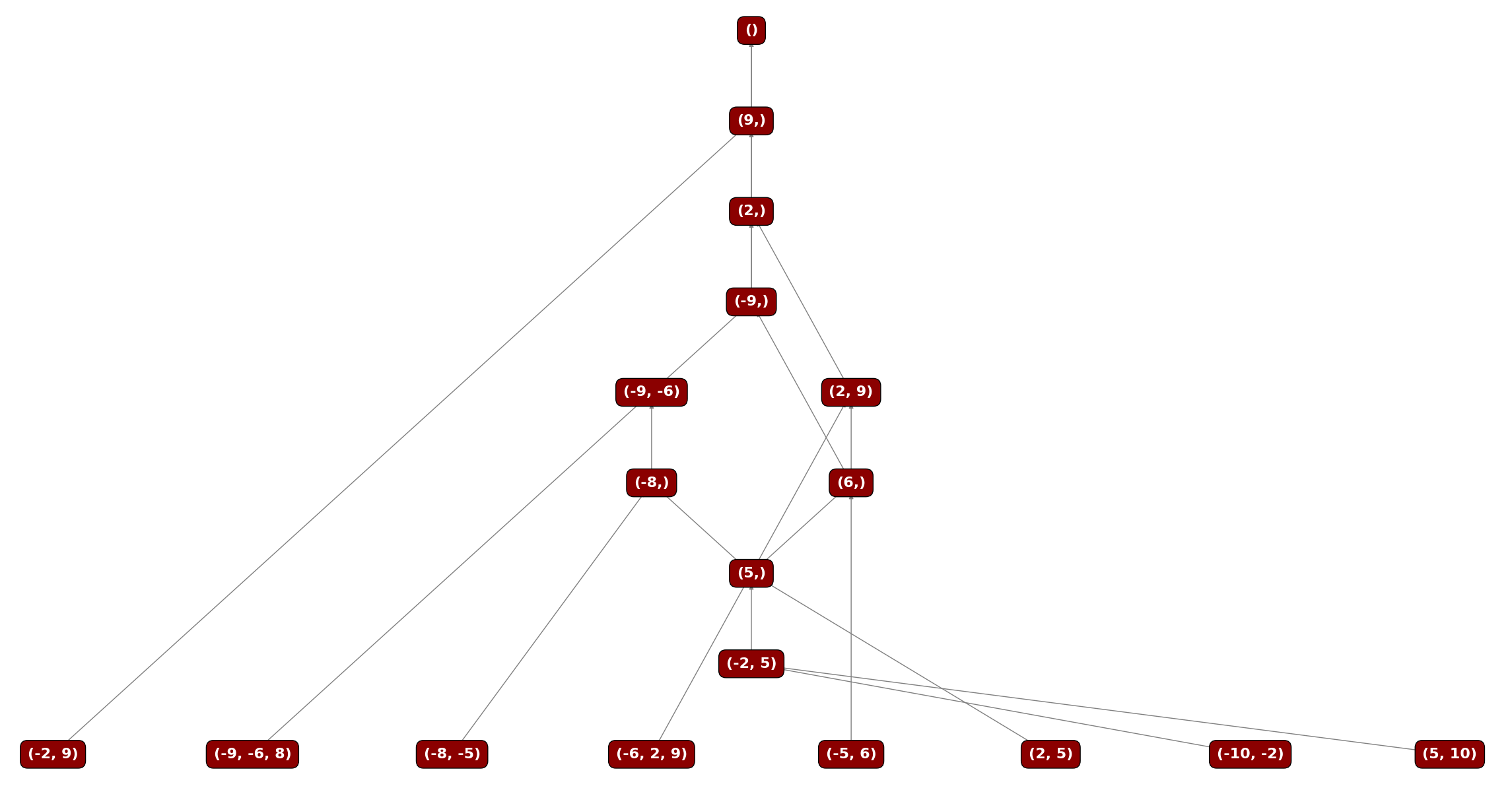} }}%
    \caption{Teacher Proof Reduction Example}%
    \label{fig:reduction_example}%
\end{figure}


\section{Runtime Comparison with Traditional Solver}
\label{appendix:runtime}
In the following table, we compare the average runtimes of our top-1 Full-Attn, top-3 Full-Attn, and the traditional solver we used as a teacher (BooleForce) on our main $\text{\textbf{SR}}(40)$ test dataset. For both Full-Attn models, we use our Python prototype implementation; for BooleForce, we use an official C implementation.

\begin{table*}[h!]
\caption{Average time (ms) to solve an instance by neural model vs. teacher solver.}
 \label{table:avg_runtime}
\vskip 0.15in
\begin{center}
\begin{small}
\begin{sc}
\scalebox{0.9}{
 \renewcommand{\arraystretch}{1.5}
    \begin{tabular}{c c c c}
     \hline
     \textbf{Solver} & \textbf{SAT (ms)} & \textbf{UNSAT (ms)} & \textbf{Total (ms)} \\
     \hline
     Full-Attn Top-1 & 2.3 & 88 & 45.15 \\
     \hline
     Full-Attn Top-3 & 3 & 54.4 & 28.7 \\
     \hline
     BooleForce & 4 & 5 & 4.5 \\
     \hline
   \bottomrule
   \end{tabular}
}
\end{sc}
\end{small}
\end{center}
\vskip -0.1in
\end{table*}

\section{Model Size Comparison with NeuroSAT}
\label{appendix:model_size}
In terms of the model architecture, both NeuRes and NeuroSAT models can be broken down to:
\begin{itemize}
    \item \textbf{Embedding/Representation Network:} for both models, this network is an LSTM-based GNN that embeds the formula graph by message-passing. We use the exact same architecture and model size to ensure that our improved representations are a result of our fully certificate-based learning objective as opposed to a tweak in the model architecture. This GNN has $429,824$ parameters in total.
    \item \textbf{Downstream Networks:} NeuroSAT: uses a 3-layer MLP applied on the literal embeddings (width $=128$) to extract the literal votes to predict if the formula is satisfiable or not. This MLP has $128\times 128\times 3 = 49,152$ parameters. NeuRes (Full-Attn): uses an attention module to select clause pairs. This attention network is composed of two 1-layer MLPs for the query and key transformations on the clauses embeddings (width $=128$). The whole attention module has $128\times 128\times 2 = 32,768$ parameters. To decode the variable assignments, NeuRes uses a 2-layer scalar MLP with $128\times 128 + 128 = 16,512$ parameters
\end{itemize}

Total NeuroSAT size $= 429,824 + 49,152 = 478,976$ parameters 

Total NeuRes size $= 429,824 + 49,280 = 479,104$ parameters

All in all, NeuRes only learns $128$ more parameters.


\end{document}